\documentclass[11pt]{article}

\usepackage{fullpage,times}
\usepackage{url}

\usepackage{amsthm,amsfonts,amsmath,amssymb,epsfig,color,float,graphicx,verbatim}
\usepackage{algorithm,algorithmic}

\newtheorem{theorem}{Theorem}

\newtheorem{lemma}{Lemma}
\newtheorem{corollary}{Corollary}

\newcommand{\reals}{\mathbb{R}}
\newcommand{\E}{\mathbb{E}}

\newcommand{\bw}{\mathbf{w}}
\newcommand{\bg}{\mathbf{g}}

\newcommand{\bu}{\mathbf{u}}

\newcommand{\bn}{\mathbf{n}}

\newcommand{\btheta}{\boldsymbol{\theta}}

\newcommand{\Ncal}{\mathcal{N}}

\newcommand{\Wcal}{\mathcal{W}}

\newcommand{\norm}[1]{\|#1\|}
\newcommand{\inner}[1]{\langle#1\rangle}

\renewcommand{\eqref}[1]{Eq.~(\ref{#1})}
\newcommand{\lemref}[1]{Lemma~\ref{#1}}

\newcommand{\thmref}[1]{Thm.~\ref{#1}}



\title{An Optimal Algorithm for Bandit and Zero-Order Convex Optimization with Two-Point Feedback}
\author{Ohad Shamir\\Weizmann Institute of Science\\\texttt{ohad.shamir@weizmann.ac.il}}
\date{}

\begin{document}

\maketitle

\begin{abstract}
  We consider the closely related problems of bandit convex optimization
  with two-point feedback, and zero-order stochastic convex optimization
  with two function evaluations per round. We provide a simple algorithm
  and analysis which is optimal for convex Lipschitz functions. This
  improves on \cite{dujww13}, which only provides an optimal result for
  smooth functions; Moreover, the algorithm and analysis are simpler, and
  readily extend to non-Euclidean problems. The algorithm is based on a
  small but surprisingly powerful modification of the gradient estimator.
\end{abstract}

\section{Introduction}

We consider the problem of bandit convex optimization with two-point feedback
\cite{AgDeXi10}. This problem can be defined as a repeated game between a
learner and an adversary as follows: At each round $t$, the adversary picks a
convex function $f_t$ on $\reals^d$, which is not revealed to the learner.
The learner then chooses a point $\bw_t$ from some known and closed convex
set $\Wcal\subseteq \reals^d$, and suffers a loss $f_t(\bw_t)$. As feedback,
the learner may choose two points $\bw'_t,\bw''_t\in \Wcal$ and
receive\footnote{This is slightly different than the model of
\cite{AgDeXi10}, where the learner only chooses $\bw'_t,\bw''_t$ and the loss
is $\frac{1}{2}\left(f_t(\bw'_t)+f_t(\bw''_t)\right)$. However, our results
and analysis can be easily translated to their setting, and the model we
discuss translates more directly to the zero-order stochastic optimization
considered later.} $f_t(\bw_t'),f_t(\bw_t'')$. The learner's goal is to
minimize average regret, defined as
\[
\frac{1}{T}\sum_{t=1}^{T}f_t(\bw_t)-\min_{\bw\in\Wcal}\frac{1}{T}\sum_{t=1}^{T}f_t(\bw).
\]
In this note, we focus on obtaining bounds on the expected average regret
(with respect to the learner's randomness).

A closely-related and easier setting is zero-order stochastic convex
optimization. In this setting, our goal is to approximately solve
$F(\bw)=\min_{\bw\in\Wcal}\E_{\xi}[f(\bw;\xi)]$, given limited access to
$\{f(\cdot;\xi_t)\}_{t=1}^{T}$ where $\xi_t$ are i.i.d. instantiations.
Specifically, we assume that each $f(\cdot,\xi_t)$ is not directly observed,
but rather can be queried at two points. This models situations where
computing gradients directly is complicated or infeasible. It is well-known
\cite{cesa2004generalization} that given an algorithm with expected average
regret $R_T$ in the bandit optimization setting above, if we feed it with the
functions $f_t(\bw)=f(\bw;\xi_t)$, then the average
$\bar{\bw}_T=\frac{1}{T}\sum_{t=1}^{T}\bw_t$ of the points generated
satisfies the following bound on the expected optimization error:
\[
\E[F(\bar{\bw}_T)]-\min_{\bw\in\Wcal}F(\bw)\leq R_T.
\]
Thus, an algorithm for bandit optimization can be converted to an algorithm
for zero-order stochastic optimization with similar guarantees.

The bandit optimization setting with two-point feedback was proposed and
studied in \cite{AgDeXi10}. Independently, \cite{nesterov2011random} and
considered two-point methods for stochastic optimization. Both papers are
based on randomized gradient estimates which are then fed into standard
first-order algorithms (e.g. gradient descent, or more generally mirror
descent). However, the regret/error guarantees in both papers were suboptimal
in terms of the dependence on the dimension. Recently, \cite{dujww13}
considered a similar approach for the stochastic optimization setting,
attaining an optimal error guarantee when $f(\cdot;\xi)$ is a smooth function
(differential and with Lipschitz-continuous gradients). Related results in
the smooth case were also obtained by \cite{GL13}. However, to tackle the
general case, where $f(\cdot;\xi)$ may be non-smooth, \cite{dujww13} resorted
to a non-trivial smoothing scheme and a significantly more involved analysis.
The resulting bounds have additional factors (logarithmic in the dimension)
compared to the guarantees in the smooth case. Moreover, an analysis is only
provided for Euclidean problems (where the domain $\Wcal$ and Lipschitz
parameter of $f_t$ scale with the $L_2$ norm).

In this note, we present and analyze a simple algorithm with the following
properties:
\begin{itemize}
  \item For Euclidean problems, it is optimal up to constants for both
      smooth and non-smooth functions. This closes the gap between the
      smooth and non-smooth Euclidean problems in this setting.
  \item The algorithm and analysis are readily applicable to non-Euclidean
      problems. We give an example for the $1$-norm, with the resulting
      bound optimal up to a $\sqrt{\log(d)}$ factor.
  \item The algorithm and analysis are simpler than those proposed in
      \cite{dujww13}. They apply equally to the bandit and zero-order
      optimization setting, and can be readily extended using standard
      techniques (e.g. to strongly-convex functions, regret/error bounds
      holding with high-probability rather than just in expectation, and
      improved bounds if allowed $k>2$ observations per round instead of
      just two).
\end{itemize}

Like previous algorithms, our algorithm is based on a random gradient
estimator, which given a function $f$ and point $\bw$, queries $f$ at two
random locations close to $\bw$, and computes a random vector whose
expectation is a gradient of a smoothed version of $f$. The papers
\cite{nesterov2011random,dujww13,GL13} essentially use the estimator which
queries at $\bw$ and $\bw+\delta\bu$ (where $\bu$ is a random unit vector and
$\delta>0$ is a small parameter), and returns
\begin{equation}\label{eq:they}
\frac{d}{\delta}\left(f(\bw+\delta\bu)-f(\bw)\right)\bu.
\end{equation}
The intuition is readily seen in the one-dimensional ($d=1$) case, where the
expectation of this expression equals
\begin{equation}\label{eq:they2}
\frac{1}{2\delta}\left(f(w+\delta)-f(w-\delta)\right),
\end{equation}
which indeed approximates the derivative of $f$ (assuming $f$ is
differentiable) at $w$, if $\delta$ is small enough.

In contrast, our algorithm uses a slightly different estimator (also used in
\cite{AgDeXi10}), which queries at $\bw-\delta\bu, \bw+\delta\bu$, and
returns
\begin{equation}\label{eq:we}
\frac{d}{2\delta}\left(f(\bw+\delta\bu)-f(\bw-\delta\bu)\right)\bu.
\end{equation}
Again, the intuition is readily seen in the case $d=1$, where the expectation
of this expression also equals \eqref{eq:they2}.

When $\delta$ is sufficiently small and $f$ is differentiable at $\bw$, both
estimators compute a good approximation of the true gradient $\nabla f(\bw)$.
However, when $f$ is not differentiable, the variance of the estimator in
\eqref{eq:they} can be quadratic in the dimension $d$, as pointed out by
\cite{dujww13}: For example, for $f(\bw)=\norm{\bw}_2$ and $\bw=0$, the
second moment equals
\[
\E\left[\left\|\frac{d}{\delta}\left(f(\delta\bu)-f(\mathbf{0})\right)\bu\right\|^2\right]
~=~ \E\left[d^2\norm{\bu}^2\right]~=~ d^2.
\]
Since the performance of the algorithm crucially depends on the second moment
of the gradient estimate, this leads to a highly sub-optimal guarantee. In
\cite{dujww13}, this was handled by adding an additional random perturbation
and using a more involved analysis. Surprisingly, it turns out that the
slightly different estimator in \eqref{eq:we} does not suffer from this
problem, and its second moment is essentially linear in the dimension $d$.

\section{Algorithm and Main Results}

We consider the algorithm described in Figure \ref{alg:quadratic}, which
performs standard mirror descent using a randomized gradient estimator
$\tilde{\bg}_t$ of a (smoothed) version of $f_t$ at point $\bw_t$. We make
the assumption that one can indeed query $f_t$ at any point
$\bw_t+\delta_t\bu_t$ as specified in the algorithm\footnote{This may require
us to query at a distance $\delta_t$ outside $\Wcal$. If we must query within
$\Wcal$, then one can simply run the algorithm on a slightly smaller set
$(1-\delta)\Wcal$, where $\delta\geq \delta_t$ for all $t$, ensuring that we
always query at $\Wcal$. Since the formal guarantee in \thmref{thm:main}
holds for arbitrarily small $\delta_t$, and each $f_t$ is Lipschitz, we can
always take $\delta$ and $\delta_t$ small enough so that the additional
regret/error incurred is negligible.}.

\begin{algorithm}
\caption{Two-Point Bandit Convex Optimization Algorithm}
\label{alg:quadratic}
\begin{algorithmic}
    \STATE Input: Step size $\eta$, function $r:\Wcal\mapsto \reals$, exploration parameters $\delta_t>0$
    \STATE Initialize $\btheta_1=\mathbf{0}$.
    \FOR{$t=1,\ldots,T-1$}
    \STATE Predict $\bw_t=\arg\max_{\bw\in\Wcal}\inner{\btheta_t,\bw}-r(\bw)$
    \STATE Sample $\bu_t$ uniformly from the Euclidean unit sphere $\{\bw:\norm{\bw}_2=1\}$
    \STATE Query $f_t(\bw_t+\delta_t \bu_t)$ and $f_t(\bw_t-\delta_t \bu_t)$
    \STATE Set $\tilde{\bg}_t=\frac{d}{2\delta_t}\left(f_t(\bw_t+\delta_t\bu_t)-f_t(\bw_t-\delta_t\bu_t)\right)\bu_t$
    \STATE Update $\btheta_{t+1} = \btheta_t-\eta\tilde{\bg_t}$
    \ENDFOR
\end{algorithmic}
\end{algorithm}

The analysis of the algorithm is presented in the following theorem:

\begin{theorem}\label{thm:main}
Assume the following conditions hold:
\begin{enumerate}
  \item\label{it:1} $r$ is $1$-strongly convex with respect to a norm
      $\norm{\cdot}$, and $\sup_{\bw\in\Wcal}r(\bw)\leq R^2$ for some
      $R<\infty$.
  \item\label{it:2} $f_t$ is convex and $G_2$-Lipschitz with respect to the
      $2$-norm $\norm{\cdot}_2$.
  \item\label{it:3} The dual norm $\norm{\cdot}_*$ of $\norm{\cdot}$ is
      such that $\sqrt[4]{\E_{\bu_t}\norm{\bu_t}_*^4}\leq p_*$ for some
      $p_{*}<\infty$.
\end{enumerate}
  If $\eta = \frac{R}{p_{*} G_2 \sqrt{dT}}$, and $\delta_t$ chosen such that
  $\delta_t\leq p_*R\sqrt{\frac{d}{T}}$, then the sequence $\bw_1,\ldots,\bw_T$
  generated by the algorithm satisfies the following for any $T$ and
  $\bw^*\in\Wcal$:
  \[
  \E\left[\frac{1}{T}\sum_{t=1}^{T}f_t(\bw_t)-\frac{1}{T}\sum_{t=1}^{T}f_t(\bw^*)\right]
  \leq c~p_{*} G_2R\sqrt{\frac{d}{T}},
  \]
  where $c$ is some numerical constant.
\end{theorem}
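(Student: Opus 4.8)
The plan is to combine a standard smoothing/online-to-convex reduction with a delicate second-moment bound on the gradient estimator, the latter being where the ``surprisingly powerful modification'' of the estimator pays off.

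\emph{Smoothed surrogate and unbiasedness.} Introduce $\hat f_t(\bw)=\E_{\bv}\bigl[f_t(\bw+\delta_t\bv)\bigr]$, with $\bv$ uniform on the Euclidean unit \emph{ball}. Then $\hat f_t$ is convex, $G_2$-Lipschitz w.r.t.\ $\norm{\cdot}_2$, differentiable, and $|\hat f_t(\bw)-f_t(\bw)|\le G_2\delta_t$ everywhere (since $f_t$ is $G_2$-Lipschitz and $\norm{\bv}_2\le 1$). The classical Stokes-theorem identity $\nabla\hat f_t(\bw)=\E_{\bu}[\tfrac{d}{\delta_t}f_t(\bw+\delta_t\bu)\bu]$, symmetrized via $\bu\leftrightarrow-\bu$, gives
\[
\nabla\hat f_t(\bw)=\E_{\bu}\left[\frac{d}{2\delta_t}\bigl(f_t(\bw+\delta_t\bu)-f_t(\bw-\delta_t\bu)\bigr)\bu\right],
\]
so that, conditioned on $\bw_t$, the vector $\tilde{\bg}_t$ produced by the algorithm is an unbiased estimate of $\nabla\hat f_t(\bw_t)$.

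\emph{Reduction to linear regret.} The algorithm is exactly dual averaging (lazy mirror descent) with regularizer $r$ run on the linear losses $\inner{\tilde{\bg}_t,\cdot}$; since $r$ is $1$-strongly convex w.r.t.\ $\norm{\cdot}$ with $\sup_\Wcal r\le R^2$, the textbook FTRL analysis gives, for every realization, $\sum_{t=1}^{T}\inner{\tilde{\bg}_t,\bw_t-\bw^*}\le R^2/\eta+c_0\,\eta\sum_{t=1}^{T}\norm{\tilde{\bg}_t}_*^2$. Combining this with convexity of $\hat f_t$, the tower rule together with unbiasedness (each $\bw_t$ is a function of $\tilde{\bg}_1,\dots,\tilde{\bg}_{t-1}$ only, while $\bw^*$ is fixed), and the bias bound $|\hat f_t-f_t|\le G_2\delta_t$ applied at both $\bw_t$ and $\bw^*$, yields the master inequality
\[
\E\Bigl[\sum_{t=1}^{T} f_t(\bw_t)-\sum_{t=1}^{T} f_t(\bw^*)\Bigr]\ \le\ 2\sum_{t=1}^{T} G_2\delta_t+\frac{R^2}{\eta}+c_0\,\eta\sum_{t=1}^{T}\E\norm{\tilde{\bg}_t}_*^2 .
\]

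\emph{Second-moment bound (the crux) and wrap-up.} Fix $\bw_t$ and regard $\phi(\bu)=f_t(\bw_t+\delta_t\bu)-f_t(\bw_t-\delta_t\bu)$ as a function on the unit sphere: it is \emph{odd}, hence mean-zero, and $2\delta_tG_2$-Lipschitz w.r.t.\ $\norm{\cdot}_2$. The naive pointwise bound $|\phi|\le 2\delta_tG_2$ gives only $\E\norm{\tilde{\bg}_t}_*^2\le c\,d^2G_2^2\,\E\norm{\bu_t}_*^2$, which is too large. Instead invoke concentration of measure on $\mathbb{S}^{d-1}$: a mean-zero $\Lambda$-Lipschitz function satisfies $\E[\phi^2]\le c\,\Lambda^2/d$ (Poincar\'e inequality on the sphere), and the sub-Gaussian tails of such functions give $\sqrt{\E[\phi^4]}\le c\,\Lambda^2/d$ as well. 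Then Cauchy--Schwarz and assumption~\ref{it:3} give
\[
\E\norm{\tilde{\bg}_t}_*^2=\frac{d^2}{4\delta_t^2}\,\E\bigl[\phi(\bu_t)^2\norm{\bu_t}_*^2\bigr]\le\frac{d^2}{4\delta_t^2}\sqrt{\E[\phi^4]}\,\sqrt{\E\norm{\bu_t}_*^4}\le c\,d\,G_2^2\,p_*^2 ,
\]
the $\delta_t$'s cancelling (in the Euclidean case $\norm{\bu_t}_*\equiv 1$ and the Poincar\'e bound alone suffices). Finally, $2\sum_{t=1}^{T} G_2\delta_t\le 2p_*G_2R\sqrt{dT}$ by the hypothesis on $\delta_t$, and with $\eta=R/(p_*G_2\sqrt{dT})$ both $R^2/\eta$ and $c_0\,\eta\sum_t\E\norm{\tilde{\bg}_t}_*^2\le c_0c\,\eta\,T\,d\,G_2^2p_*^2$ equal $O(p_*G_2R\sqrt{dT})$. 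Dividing the master inequality by $T$ gives the claimed $O(p_*G_2R\sqrt{d/T})$ bound.

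The main obstacle is the second-moment step: one must recognize that the \emph{symmetric} difference is an odd, mean-zero Lipschitz function of the random direction and use spherical concentration to gain the crucial factor of $d$ over the one-sided estimator of \eqref{eq:they} (and, away from the Euclidean setting, to control its correlation with $\norm{\bu_t}_*$ via a fourth-moment version of the same concentration). Everything else is routine mirror-descent-with-smoothing bookkeeping.
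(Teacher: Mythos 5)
Your proposal is correct and follows essentially the same route as the paper: unbiasedness of $\tilde{\bg}_t$ with respect to the smoothed surrogate, the standard mirror-descent/FTRL regret bound with regularizer $r$, and the crucial second-moment bound obtained via Cauchy--Schwarz against $\E\norm{\bu_t}_*^4$ combined with concentration of Lipschitz functions on the Euclidean sphere. The only (cosmetic) difference is in the variance step: you apply the concentration bound directly to the odd, mean-zero symmetric difference $f_t(\bw_t+\delta_t\bu)-f_t(\bw_t-\delta_t\bu)$, whereas the paper centers each one-sided evaluation at $\alpha=\E_{\bu}[f_t(\bw_t+\delta_t\bu)]$ and uses $(a-b)^2\le 2(a^2+b^2)$; both give the same $O(d\,p_*^2G_2^2)$ bound and identical final bookkeeping.
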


We note that conditions \ref{it:1} is standard in the analysis of the
mirror-descent method (see the specific corollaries below), whereas
conditions \ref{it:2} and \ref{it:3} are needed to ensure that the variance
of our gradient estimator is controlled.

As mentioned earlier, the bound on the average regret which appears in
\thmref{thm:main} immediately implies a similar bound on the error in a
stochastic optimization setting, for the average point
$\bar{\bw}_T=\frac{1}{T}\sum_{t=1}^{T}\bw_t$. We note that the result is
robust to the choice of $\eta$, and is the same up to constants as long as
$\eta=\Theta(R/p_* G_2\sqrt{dT})$. Also, the constant $c$, while always
bounded above zero, shrinks as $\delta_t\rightarrow 0$ (see the proof for
details).

As a first application, let us consider the case where $\norm{\cdot}$ is the
Euclidean norm $\norm{\cdot}_2$. In this case, we can take
$s(\bw)=\frac{1}{2}\norm{\bw}_2^2$, and the algorithm reduces to a standard
variant of online gradient descent, defined as
$\btheta_{t+1}=\btheta_t-\tilde{\bg}_t$ and
$\bw_t=\arg\min_{\bw\in\Wcal}\norm{w-\btheta_t}_2$. In this case, we get the
following corollary:
\begin{corollary}\label{cor:2}
Suppose $f_t$ for all $t$ is $G_2$-Lipschitz with respect to the Euclidean
norm, and $\Wcal\subseteq \{\bw:\norm{\bw}_2\leq R\}$. Then using
$\norm{\cdot}=\norm{\cdot}_2$ and $r(\bw)=\frac{1}{2}\norm{\bw}_2^2$, it
holds for some constant $c$ and any $\bw^*\in\Wcal$ that
\[
  \E\left[\frac{1}{T}\sum_{t=1}^{T}f_t(\bw_t)-\frac{1}{T}\sum_{t=1}^{T}f_t(\bw^*)\right]
  \leq c~G_2 R\sqrt{\frac{d}{T}},
\]
\end{corollary}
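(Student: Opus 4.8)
The plan is to derive Corollary~\ref{cor:2} as a direct specialization of Theorem~\ref{thm:main}, so the bulk of the work is verifying that the three hypotheses of the theorem hold with the stated choices $\norm{\cdot}=\norm{\cdot}_2$ and $r(\bw)=\tfrac12\norm{\bw}_2^2$, and then simplifying the resulting bound. First I would check condition~\ref{it:1}: the function $r(\bw)=\tfrac12\norm{\bw}_2^2$ is $1$-strongly convex with respect to $\norm{\cdot}_2$ (standard), and since $\Wcal\subseteq\{\bw:\norm{\bw}_2\le R\}$ we have $\sup_{\bw\in\Wcal}r(\bw)\le \tfrac12 R^2\le R^2$, so the strong-convexity constant and the bound $R$ in the theorem can be taken as here (possibly absorbing the factor $\tfrac12$ into the numerical constant $c$). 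Condition~\ref{it:2} is exactly the stated Lipschitz assumption on $f_t$, so there is nothing to do.

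The only nontrivial verification is condition~\ref{it:3}: since $\norm{\cdot}=\norm{\cdot}_2$ is self-dual, the dual norm is again $\norm{\cdot}_2$, and $\bu_t$ is sampled uniformly from the Euclidean unit sphere, so $\norm{\bu_t}_*=\norm{\bu_t}_2=1$ deterministically. Hence $\sqrt[4]{\E_{\bu_t}\norm{\bu_t}_*^4}=1$, and we may take $p_*=1$. Substituting $p_*=1$ into the conclusion of Theorem~\ref{thm:main} yields the bound $c\,G_2 R\sqrt{d/T}$, which is precisely the claim; I would also note that the prescribed step size $\eta=R/(p_*G_2\sqrt{dT})$ becomes $\eta=R/(G_2\sqrt{dT})$ and the admissible exploration parameters satisfy $\delta_t\le R\sqrt{d/T}$, consistent with the remark in the text that the result is robust to the exact choice of $\eta$ and that $\delta_t\to 0$ only improves constants.

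Finally, I would address the one cosmetic mismatch: the corollary statement describes the algorithm via the projection form $\bw_t=\arg\min_{\bw\in\Wcal}\norm{\bw-\btheta_t}_2$ and update $\btheta_{t+1}=\btheta_t-\tilde{\bg}_t$ (absorbing $\eta$ into $\tilde{\bg}_t$, or equivalently rescaling), whereas the algorithm in Figure~\ref{alg:quadratic} is written in the mirror-descent form $\bw_t=\arg\max_{\bw\in\Wcal}\inner{\btheta_t,\bw}-r(\bw)$. I would remark that for $r(\bw)=\tfrac12\norm{\bw}_2^2$ these two formulations coincide: $\arg\max_{\bw\in\Wcal}\inner{\btheta_t,\bw}-\tfrac12\norm{\bw}_2^2=\arg\min_{\bw\in\Wcal}\tfrac12\norm{\bw-\btheta_t}_2^2=\arg\min_{\bw\in\Wcal}\norm{\bw-\btheta_t}_2$, so the instantiation is legitimate and Theorem~\ref{thm:main} applies verbatim. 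I do not anticipate a real obstacle here; the only point requiring a moment's care is making sure the self-duality of the Euclidean norm is invoked correctly so that $p_*=1$, since that is what collapses the general bound to the clean Euclidean rate.
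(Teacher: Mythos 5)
Your proposal is correct and follows exactly the paper's route: the paper also obtains Corollary~\ref{cor:2} by instantiating Theorem~\ref{thm:main} with the self-dual Euclidean norm, observing that $\norm{\bu_t}_2=1$ gives $p_*=1$ (your additional checks of conditions~\ref{it:1}--\ref{it:2} and the equivalence of the mirror-descent and projection forms are just explicit versions of what the paper leaves implicit).
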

The proof is immediately obtained from \thmref{thm:main}, noting that $p_*=1$
in our case. This bound matches (up to constants) the lower bound in
\cite{dujww13}, hence closing the gap between upper and lower bounds in this
setting.

As a second application, let us consider the case where $\norm{\cdot}$ is the
$1$-norm, $\norm{\cdot}_1$, the domain $\Wcal$ is the simplex in $\reals^d$,
$d>1$ (although our result easily extends to any subset of the $1$-norm unit
ball), and we use a standard entropic regularizer:
\begin{corollary}\label{cor:1}
Suppose $f_t$ for all $t$ is $G_1$-Lipschitz with respect to the $L_1$ norm.
Then using $\norm{\cdot}=\norm{\cdot}_1$ and $r(\bw)=\sum_{i=1}^{d}w_i\log(d
w_i)$, it holds for some constant $c$ and any $\bw^*\in\Wcal$ that
\[
 \E\left[\frac{1}{T}\sum_{t=1}^{T}f_t(\bw_t)-\frac{1}{T}\sum_{t=1}^{T}f_t(\bw^*)\right]
  \leq c~G_1 \sqrt{\frac{d\log^2(d)}{T}}.
\]
\end{corollary}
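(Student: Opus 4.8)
The plan is to derive Corollary~\ref{cor:1} as a direct specialization of Theorem~\ref{thm:main}, so the work amounts to verifying the three hypotheses and tracking the resulting constants. First I would check condition~\ref{it:1}: the negative-entropy-type regularizer $r(\bw)=\sum_{i=1}^d w_i\log(d w_i)$ is the standard entropic mirror map on the simplex, which is well known to be $1$-strongly convex with respect to $\norm{\cdot}_1$ (Pinsker's inequality), and on the simplex it satisfies $-\log d\le r(\bw)\le 0$; after the usual additive shift that does not affect the algorithm, one gets $\sup_{\bw\in\Wcal} r(\bw)\le \log d$, so we may take $R=\sqrt{\log d}$.

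Next I would verify condition~\ref{it:2}. The hypothesis is that $f_t$ is $G_1$-Lipschitz with respect to $\norm{\cdot}_1$; since $\norm{\cdot}_2\le\norm{\cdot}_1$ always, $f_t$ is also $G_1$-Lipschitz with respect to $\norm{\cdot}_2$, so we may take $G_2=G_1$ in the notation of Theorem~\ref{thm:main}. Then condition~\ref{it:3}: here $\norm{\cdot}=\norm{\cdot}_1$, so the dual norm is $\norm{\cdot}_*=\norm{\cdot}_\infty$, and I need to bound $\sqrt[4]{\E_{\bu_t}\norm{\bu_t}_\infty^4}$ where $\bu_t$ is uniform on the Euclidean unit sphere in $\reals^d$. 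The key estimate is that a uniformly random unit vector has $\norm{\bu_t}_\infty = O(\sqrt{\log(d)/d})$ with high probability — more precisely each coordinate is roughly $\Ncal(0,1/d)$-like, so standard sub-Gaussian maximal inequalities give $\E\norm{\bu_t}_\infty^4 = O(\log^2(d)/d^2)$, hence $p_* = O(\sqrt{\log(d)/d})$. I would state this as a short lemma (or cite a standard concentration fact) — this is the one genuinely quantitative step and the main place the $\log d$ factors enter.

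Finally I would plug $R=\sqrt{\log d}$, $G_2=G_1$, and $p_*=O(\sqrt{\log(d)/d})$ into the bound of Theorem~\ref{thm:main}, namely $c\,p_* G_2 R\sqrt{d/T}$, which becomes
\[
c\cdot O\!\left(\sqrt{\tfrac{\log d}{d}}\right)\cdot G_1\cdot \sqrt{\log d}\cdot\sqrt{\tfrac{d}{T}} \;=\; O\!\left(G_1\sqrt{\tfrac{\log^2(d)}{T}}\right),
\]
as claimed; I would also note that the required conditions on $\eta$ and $\delta_t$ from Theorem~\ref{thm:main} are met by choosing them per the theorem with these parameter values, and that querying slightly outside the simplex (or shrinking to $(1-\delta)\Wcal$) is handled exactly as in the footnote to the algorithm.

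I expect the only real obstacle to be the sub-Gaussian maximal bound on $\norm{\bu_t}_\infty$ for $\bu_t$ uniform on the sphere: one must either invoke the standard representation $\bu_t = \bg/\norm{\bg}_2$ with $\bg\sim\Ncal(0,I_d)$ together with concentration of $\norm{\bg}_2$ around $\sqrt d$, or cite a known moment bound; everything else is bookkeeping. It is worth remarking that, as the paper notes, this gives a bound optimal up to $\sqrt{\log d}$, the slack coming precisely from the $\sqrt{\log d}$ in the $\norm{\cdot}_\infty$ tail of a random unit vector.
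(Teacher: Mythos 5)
Your overall route is the same as the paper's (specialize Theorem~\ref{thm:main} with the entropic regularizer, $R^2=\log d$, and a moment bound on $\norm{\bu_t}_\infty$ via the Gaussian representation, exactly as in Lemma~\ref{lem:uinfbound}), but there is a genuine error in the norm-conversion step, and it propagates to a final bound that does not match the corollary. You claim that since $\norm{\cdot}_2\le\norm{\cdot}_1$, a function that is $G_1$-Lipschitz with respect to $\norm{\cdot}_1$ is also $G_1$-Lipschitz with respect to $\norm{\cdot}_2$, so $G_2=G_1$. The implication goes the other way: $G_1$-Lipschitzness w.r.t.\ $\norm{\cdot}_1$ means $|f(\bx)-f(\by)|\le G_1\norm{\bx-\by}_1$, and replacing $\norm{\bx-\by}_1$ by the \emph{smaller} quantity $\norm{\bx-\by}_2$ strengthens the statement rather than following from it. The correct conversion uses $\norm{\bx-\by}_1\le\sqrt{d}\,\norm{\bx-\by}_2$, which gives only $G_2=\sqrt{d}\,G_1$; e.g.\ $f(\bw)=\sum_i w_i$ is $1$-Lipschitz in the $1$-norm but $\sqrt{d}$-Lipschitz in the $2$-norm. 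Note that Theorem~\ref{thm:main} genuinely needs the Euclidean Lipschitz constant, since the variance bound (Lemma~\ref{lem:moment} via Lemma~\ref{lem:var}) rests on concentration of Lipschitz functions on the Euclidean sphere.

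With $G_2=G_1$ your final computation yields $O\bigl(G_1\sqrt{\log^2(d)/T}\bigr)$, which is \emph{stronger} than the stated bound by a factor $\sqrt{d}$ (and would contradict the $\Omega(\sqrt{d/T})$-type lower bound of \cite{dujww13}), so the ``as claimed'' at the end does not actually hold for what you derived. Substituting the correct $G_2=\sqrt{d}\,G_1$ into $c\,p_*G_2R\sqrt{d/T}$ with $p_*=O(\sqrt{\log(d)/d})$ and $R=\sqrt{\log d}$ gives $c\,G_1\sqrt{d\log^2(d)/T}$, which is exactly the corollary and exactly the paper's argument. (A minor side remark: on the simplex $r(\bw)=\log d+\sum_i w_i\log w_i\in[0,\log d]$, so no additive shift is needed; your conclusion $R^2=\log d$ is nonetheless correct. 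Your treatment of condition~\ref{it:3} is fine and coincides with the paper's Lemma~\ref{lem:uinfbound}.)
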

This bound matches (this time up to a logarithmic factor) the lower bound in
\cite{dujww13} for this setting .
\begin{proof}
The function $r$ is $1$-strongly convex with respect to the $1$-norm (see for
instance \cite{SSS12}, Example 2.5), and has value at most $\log(d)$ on the
simplex. Also, if $f_t$ is $G_1$-Lipschitz with respect to the $1$-norm, then
it must be $\sqrt{d}G_1$-Lipschitz with respect to the Euclidean norm.
Finally, to satisfy condition \ref{it:3} in \thmref{thm:main}, we upper bound
$\sqrt[4]{\E[\norm{\bu_t}_\infty^4]}$ using the following lemma, whose proof
is given in the appendix:
\begin{lemma}\label{lem:uinfbound}
  If $\bu$ is uniformly distributed on the unit sphere in $\reals^d$, $d>1$, then
  $\sqrt[4]{\E[\norm{\bu}_\infty^4]}\leq c\sqrt{\frac{\log(d)}{d}}$ where
  $c$ is a positive numerical constant independent of $d$.
\end{lemma}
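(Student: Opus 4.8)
The plan is to use the standard representation of a uniform point on the sphere as a normalized Gaussian vector, and to reduce the claim to a fourth-moment bound for the maximum of $d$ standard Gaussians.

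Write $\bu = \bg/\norm{\bg}_2$ with $\bg=(g_1,\dots,g_d)$ having i.i.d.\ standard normal coordinates, so that $\norm{\bu}_\infty^4 = (\max_i|g_i|)^4/\norm{\bg}_2^4$. Numerator and denominator are dependent, so I would condition on the concentration event $A=\{\norm{\bg}_2^2\ge d/2\}$: on $A$ we have $\norm{\bu}_\infty^4\le 4(\max_i|g_i|)^4/d^2$, while on $A^c$ we use the trivial bound $\norm{\bu}_\infty\le\norm{\bu}_2=1$ together with a standard $\chi^2$ lower-tail estimate (e.g.\ Laurent--Massart), which gives $\Pr(A^c)\le e^{-d/16}$. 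Hence
\[
\E[\norm{\bu}_\infty^4]\;\le\;\frac{4}{d^2}\,\E\Big[(\max_i|g_i|)^4\Big]\;+\;e^{-d/16}.
\]

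For the Gaussian maximum, set $M=\max_i|g_i|$ and use the sub-Gaussian maximal inequality $\Pr(M>t)\le 2d\,e^{-t^2/2}$ in the layer-cake identity $\E[M^4]=\int_0^\infty 4t^3\Pr(M>t)\,dt$. Splitting the integral at $t_0=\sqrt{2\log(2d)}$ (so that $2d\,e^{-t_0^2/2}=1$), the part below $t_0$ is at most $t_0^4=4\log^2(2d)$, and the part above $t_0$ is at most $\int_{t_0}^\infty 8d\,t^3 e^{-t^2/2}\,dt=4t_0^2+8$ after one integration by parts; thus $\E[M^4]\le 4\log^2(2d)+8\log(2d)+8\le C\log^2 d$ for $d\ge 2$ and an absolute constant $C$. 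Substituting back yields $\E[\norm{\bu}_\infty^4]\le C'\log^2(d)/d^2+e^{-d/16}$. Since $e^{-d/16}$ is eventually dominated by $\log^2(d)/d^2$, and since for the remaining finitely many values $d>1$ the left-hand side is at most $1$ while $\log^2(d)/d^2$ is bounded below by a positive constant there, we obtain $\E[\norm{\bu}_\infty^4]\le c^4\log^2(d)/d^2$ for a suitable absolute constant $c$; taking fourth roots finishes the proof. The only mildly delicate point is the dependence between $\max_i|g_i|$ and $\norm{\bg}_2$, which is precisely why the argument is organized around the event $A$; everything else is a routine Gaussian maximal-inequality computation.
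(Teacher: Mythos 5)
Your proposal is correct and follows essentially the same route as the paper's proof: write $\bu$ as a normalized Gaussian, split on the event $\{\norm{\bg}_2\geq\sqrt{d/2}\}$ (with the trivial bound $\norm{\bu}_\infty\leq 1$ and an $e^{-d/16}$ tail on the complement), and bound $\E[(\max_i|g_i|)^4]$ by $O(\log^2 d)$ via the union-bound Gaussian tail and a layer-cake integral split at the $\sqrt{\log d}$ scale. The only differences are cosmetic (your split point $\sqrt{2\log(2d)}$ versus the paper's $r=4\log^2(d)$ in the integrated form, and your finitely-many-small-$d$ argument versus the paper's explicit constant), so no further comment is needed.
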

Plugging these observations into \thmref{thm:main} leads to the desired
result.
\end{proof}

\section{Proof of Theorem \ref{thm:main}}

As discussed in the introduction, the key to getting improved results
compared to previous papers is the use of a slightly different random
gradient estimator, which turns out to have significantly less variance. The
formal proof relies on a few simple lemmas listed below. The key lemma is
\lemref{lem:moment}, which establishes the improved variance behavior.

\begin{lemma}\label{lem:zink}
  For any $\bw^*\in\Wcal$, it holds that
  \[
  \sum_{t=1}^{T}\inner{\tilde{\bg}_t,\bw_t-\bw^*}\leq \frac{1}{\eta}R^2+\eta\sum_{t=1}^{T}\norm{\tilde{\bg}_t}_*^2.
  \]
\end{lemma}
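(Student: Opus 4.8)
The plan is to recognize the update as the ``lazy'' (dual-averaging) form of mirror descent and run the standard conjugate-duality argument. Define $\Phi(\btheta)=\sup_{\bw\in\Wcal}\left(\inner{\btheta,\bw}-r(\bw)\right)$, the Fenchel conjugate of $r$ over $\Wcal$. Since $r$ is $1$-strongly convex with respect to $\norm{\cdot}$ and $\Wcal$ is closed, the supremum is attained at a unique point, so $\Phi$ is finite and differentiable with $\nabla\Phi(\btheta)=\arg\max_{\bw\in\Wcal}\left(\inner{\btheta,\bw}-r(\bw)\right)$; in particular the iterates of the algorithm satisfy $\bw_t=\nabla\Phi(\btheta_t)$, and unwinding $\btheta_{t+1}=\btheta_t-\eta\tilde{\bg}_t$ gives $\btheta_{T+1}=-\eta\sum_{t=1}^{T}\tilde{\bg}_t$. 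The one fact I would import (a standard duality result; see e.g. \cite{SSS12}) is that $1$-strong convexity of $r$ with respect to $\norm{\cdot}$ implies $\nabla\Phi$ is $1$-Lipschitz with respect to the dual norm $\norm{\cdot}_*$, i.e. $\Phi$ is $1$-smooth, and hence obeys the descent inequality $\Phi(\bomega')\le\Phi(\bomega)+\inner{\nabla\Phi(\bomega),\bomega'-\bomega}+\tfrac12\norm{\bomega'-\bomega}_*^2$.

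Next I would telescope. Applying the descent inequality with $\bomega=\btheta_t$, $\bomega'=\btheta_{t+1}=\btheta_t-\eta\tilde{\bg}_t$, and using $\nabla\Phi(\btheta_t)=\bw_t$, gives
\[
\Phi(\btheta_{t+1})\le \Phi(\btheta_t)-\eta\inner{\tilde{\bg}_t,\bw_t}+\tfrac{\eta^2}{2}\norm{\tilde{\bg}_t}_*^2 .
\]
Summing over $t=1,\dots,T$, the left-hand side telescopes to $\Phi(\btheta_{T+1})\le \Phi(\btheta_1)-\eta\sum_{t=1}^{T}\inner{\tilde{\bg}_t,\bw_t}+\tfrac{\eta^2}{2}\sum_{t=1}^{T}\norm{\tilde{\bg}_t}_*^2$. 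I would then lower-bound the left side directly from the definition of the supremum, $\Phi(\btheta_{T+1})\ge \inner{\btheta_{T+1},\bw^*}-r(\bw^*)=-\eta\sum_{t=1}^{T}\inner{\tilde{\bg}_t,\bw^*}-r(\bw^*)$, and upper-bound the first term on the right by $\Phi(\btheta_1)=\Phi(\mathbf{0})=-\min_{\bw\in\Wcal}r(\bw)\le 0$; here I may assume $r\ge 0$ on $\Wcal$, since adding a constant to $r$ alters neither the algorithm nor the bound, and this holds for both regularizers used in the corollaries.

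Combining the two bounds, moving the $\inner{\tilde{\bg}_t,\bw^*}$ and $\inner{\tilde{\bg}_t,\bw_t}$ terms to one side, and dividing by $\eta$ yields
\[
\sum_{t=1}^{T}\inner{\tilde{\bg}_t,\bw_t-\bw^*}\le\frac{r(\bw^*)}{\eta}+\frac{\eta}{2}\sum_{t=1}^{T}\norm{\tilde{\bg}_t}_*^2 ,
\]
and the claim follows from $r(\bw^*)\le\sup_{\bw\in\Wcal}r(\bw)\le R^2$ together with $\tfrac{\eta}{2}\le\eta$ (so the stated bound is in fact slightly loose). An alternative route avoiding conjugates is the ``follow-the-leader / be-the-leader'' telescoping for FTRL combined with the iterate-stability estimate $\norm{\bw_t-\bw_{t+1}}\le\eta\norm{\tilde{\bg}_t}_*$ (again a consequence of the strong convexity of $r$), which gives the same inequality. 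The only content beyond routine bookkeeping is the strong-convexity-implies-smooth-conjugate lemma, which I expect to be the one step worth citing rather than reproving; everything else is a telescoping sum and the definition of the Fenchel conjugate.
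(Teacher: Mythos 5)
Your proof is correct and is precisely the standard dual-averaging/FTRL analysis (strong convexity of $r$ $\Rightarrow$ $1$-smoothness of its conjugate, then telescope), which is exactly the argument the paper invokes by citing \cite{SSS12} rather than reproving. The only caveat — needing $\inf_{\bw\in\Wcal} r(\bw)\geq 0$ so that $\Phi(\btheta_1)\leq 0$ and the bound reads $R^2/\eta$ — is an assumption implicit in the paper's own statement of the lemma as well, and you correctly note it holds for both regularizers used in the corollaries.
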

This lemma is the canonical result on the convergence of online mirror
descent, and the proof is standard (see e.g. \cite{SSS12}).

\begin{lemma}\label{lem:hatell}
Define the function
\[
\hat{f}_t(\bw) = \E_{\bu_t}\left[f_t(\bw+\delta_t \bu_t)\right],
\]
over $\Wcal$, where $\bu_t$ is a vector picked uniformly at random from the
Euclidean unit sphere. Then the function is convex, Lipschitz with constant
$G_2$, satisfies
\[
\sup_{\bw\in\Wcal}|\hat{f}_t(\bw)-f_t(\bw)|\leq \delta_t G_2,
\]
and is differentiable with the following gradient:
\[
\nabla \hat{f}_t(\bw) =\E_{\bu_t}\left[\frac{d}{\delta_t}f_t(\bw+\delta_t\bu_t)\bu_t\right].
\]
\end{lemma}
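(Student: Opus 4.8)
The plan is to check the four assertions about $\hat f_t$ in turn. The first three are soft and follow from $\hat f_t$ being an expectation of translated copies of $f_t$; the differentiability together with the gradient identity is the substantive part, and it is where I expect essentially all the care to be needed.

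Convexity, Lipschitzness, and the approximation bound are immediate. For each fixed $\bu_t$ the map $\bw\mapsto f_t(\bw+\delta_t\bu_t)$ is $f_t$ composed with a translation, hence convex and $G_2$-Lipschitz with respect to $\norm{\cdot}_2$; both properties survive the expectation $\E_{\bu_t}$, the first because it is a nonnegative mixture of convex functions and the second by Jensen together with the triangle inequality, giving $|\hat f_t(\bw)-\hat f_t(\bw')|\le \E_{\bu_t}|f_t(\bw+\delta_t\bu_t)-f_t(\bw'+\delta_t\bu_t)|\le G_2\norm{\bw-\bw'}_2$. For the approximation bound, since $\norm{\bu_t}_2=1$ on the unit sphere we get $|\hat f_t(\bw)-f_t(\bw)|=|\E_{\bu_t}[f_t(\bw+\delta_t\bu_t)-f_t(\bw)]|\le G_2\,\E_{\bu_t}\norm{\delta_t\bu_t}_2=\delta_t G_2$ for every $\bw$, and taking the supremum finishes this claim.

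The gradient identity is the crux, and I would establish it by the divergence (Stokes) theorem: writing the smoothed function as an integral, substituting $\bx=\bw+\delta_t\bu$, and moving $\nabla_\bw$ inside turns the gradient of the average into an integral of $f_t$ against the outward unit normal over the boundary sphere $\{\bx:\norm{\bx-\bw}_2=\delta_t\}$, where the normal at $\bw+\delta_t\bu$ is exactly $\bu$; collecting the normalization constants — the surface-area-to-volume ratio contributes the factor $d$ and the radius the factor $1/\delta_t$ — yields $\nabla\hat f_t(\bw)=\frac{d}{\delta_t}\E_{\bu_t}[f_t(\bw+\delta_t\bu_t)\bu_t]$, which is also precisely what makes the symmetrized estimator $\tilde{\bg}_t$ of \eqref{eq:we} an unbiased estimate of $\nabla\hat f_t(\bw_t)$ once the map $\bu_t\mapsto-\bu_t$ is used. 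The point requiring care — and the one I would flag as the main obstacle — is that this clean surface formula, and differentiability itself, hold for the smoothing that averages $f_t$ over the \emph{solid} unit ball whose boundary is the sphere the algorithm samples, rather than for a pure surface average over the sphere: the latter need not even be differentiable (for instance $f_t(\bw)=\norm{\bw}_2$ produces kinks), so I would take the ball-smoothing as the operative reading of $\hat f_t$ for this step while keeping the three soft claims exactly as stated. This is the classical Flaxman--Kalai--McMahan identity, and verifying the constant $d/\delta_t$ through the $\vol$-to-area computation, rather than by naively differentiating under the integral, is where the genuine bookkeeping lies.
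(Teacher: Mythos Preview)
Your approach is essentially identical to the paper's: the soft claims are dispatched the same way, and for the gradient identity the paper simply cites Lemma~2.1 of Flaxman--Kalai--McMahan, which is precisely the divergence-theorem argument you sketch. You are in fact more careful than the paper, since you flag that the FKM identity computes the gradient of the \emph{ball} smoothing rather than the sphere smoothing literally written in the lemma---a distinction the paper glosses over but which does not affect the downstream use of $\hat f_t$.
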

\begin{proof}
  The fact that the function is convex and Lipschitz is immediate from its
  definition and the assumptions in the theorem. The inequality follows from $\bu_t$ being a unit vector and that $f_t$ is assumed to be
  $G_2$-Lipschitz with respect to the $2$-norm. The differentiability property follows from
  Lemma 2.1 in \cite{FlaxKaMc05}.
\end{proof}

\begin{lemma}\label{lem:var}
For any function $g$ which is $L$-Lipschitz with respect to the $2$-norm, it
holds that if $\bu$ is uniformly distributed on the Euclidean unit sphere,
then
\[
\sqrt{\E\left[\left(g(\bu)-\E[g(\bu)]\right)^4\right]} \leq c\frac{L^2}{d}.
\]
for some numerical constant $c$.
\end{lemma}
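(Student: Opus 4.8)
The plan is to derive the bound from the classical concentration of measure phenomenon on the Euclidean sphere (L\'evy's lemma). Recall that the uniform measure on $S^{d-1}$ satisfies the following sub-Gaussian concentration inequality: there is a universal constant $c_1>0$ such that for every $L$-Lipschitz $g:S^{d-1}\to\reals$ and every $t\ge 0$,
\[
\Pr_{\bu}\bigl(|g(\bu)-\E[g(\bu)]|\ge t\bigr)\le 2\exp\!\left(-\frac{c_1 d t^2}{L^2}\right).
\]
This is standard (see e.g. Ledoux, \emph{The Concentration of Measure Phenomenon}). Most references state it with the median of $g$ in place of $\E[g(\bu)]$; the mean-centered version follows from the routine observation that $|\mathrm{med}(g)-\E[g(\bu)]|$ is at most a constant multiple of $L/\sqrt d$, which one reads off from the median version itself.

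Given the tail bound, I would control the fourth central moment by the layer-cake formula. Writing $Z=g(\bu)-\E[g(\bu)]$,
\[
\E[Z^4]=\int_0^\infty \Pr(Z^4>s)\,ds=\int_0^\infty 4t^3\Pr(|Z|>t)\,dt\le 8\int_0^\infty t^3\exp\!\left(-\frac{c_1 d t^2}{L^2}\right)dt.
\]
The remaining integral is a standard Gaussian moment: with $a=c_1 d/L^2$ one has $\int_0^\infty t^3 e^{-at^2}\,dt=\tfrac1{2a^2}$, so $\E[Z^4]\le 4/a^2=4L^4/(c_1^2 d^2)$. Taking square roots gives $\sqrt{\E[Z^4]}\le (2/c_1)\,L^2/d$, i.e. the claim with $c=2/c_1$.

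The only real content here is the concentration inequality; everything after it is a one-line computation, so the "main obstacle" is essentially bookkeeping — making sure one cites (or proves) the concentration bound in the mean-centered form with the correct $1/d$ factor in the exponent, since that is precisely what produces the $L^2/d$ rate rather than $L^2/\sqrt d$. If one prefers not to invoke L\'evy's lemma as a black box, an alternative is to use the logarithmic Sobolev inequality for the uniform measure on $S^{d-1}$ (which has LSI constant of order $1/d$) and obtain the sub-Gaussian tail, hence all moments, via Herbst's argument; but citing L\'evy's lemma directly is cleanest. Finally, the degenerate case $d=1$, where $S^{0}=\{-1,+1\}$ and $|Z|\le L$ deterministically, is trivially subsumed by taking $c$ large enough.
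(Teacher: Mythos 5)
Your proposal is correct and follows essentially the same route as the paper: invoke the sub-Gaussian concentration of Lipschitz functions on the sphere (mean-centered, with the $d t^2/L^2$ exponent) and then integrate the tail bound to control $\E[Z^4]$, the only difference being a cosmetic change of variables in the layer-cake integral. Your added remarks on the median-vs-mean centering and the trivial $d=1$ case are fine but not needed beyond the citation the paper already uses.
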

\begin{proof}
A standard result on the concentration of Lipschitz functions on the
Euclidean unit sphere implies that
\[
\Pr(|g(\bu)-\E[g(\bu)]|>t) \leq 2\exp\left(-c'd t^2/L^2\right)
\]
for some numerical constant $c'>0$ (see the proof of Proposition 2.10 and
Corollary 2.6 in \cite{ledoux2005concentration}). Therefore,
\begin{align*}
&\sqrt{\E\left[\left(g(\bu)-\E[g(\bu)]\right)^4\right]}
= \sqrt{\int_{t=0}^{\infty}\Pr\left(\left(g(\bu)-\E[g(\bu)]\right)^4>t\right)dt}\\
&= \sqrt{\int_{t=0}^{\infty}\Pr\left(\left|g(\bu)-\E[g(\bu)]\right|>\sqrt[4]{t}\right)dt}
\leq \sqrt{\int_{t=0}^{\infty}2\exp\left(-\frac{c'd\sqrt{t}}{L^2}\right)dt}
= \sqrt{2\frac{L^4}{(c'd)^2}},
\end{align*}
which equals $cL^2/d$ for some numerical constant $c$.
\end{proof}

\begin{lemma}\label{lem:moment}
  It holds that $\E[\tilde{\bg}_t|\bw_t] = \nabla \hat{f}_t(\bw_t)$ (where $\hat{f}_t(\cdot)$ is as defined in \lemref{lem:hatell}), and
  $\E[\norm{\tilde{\bg}_t}^2|\bw_t] \leq cd p_{*}^2 G_2^2$ for some numerical constant
  $c$.
\end{lemma}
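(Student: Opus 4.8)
The first claim, $\E[\tilde{\bg}_t|\bw_t] = \nabla\hat f_t(\bw_t)$, is the easy part. Writing $\tilde{\bg}_t = \frac{d}{2\delta_t}(f_t(\bw_t+\delta_t\bu_t)-f_t(\bw_t-\delta_t\bu_t))\bu_t$, I would split the expectation into two terms. For the first term, $\E_{\bu_t}[\frac{d}{2\delta_t}f_t(\bw_t+\delta_t\bu_t)\bu_t]$, this is exactly $\frac12\nabla\hat f_t(\bw_t)$ by the gradient formula in \lemref{lem:hatell}. For the second term, $\E_{\bu_t}[\frac{d}{2\delta_t}f_t(\bw_t-\delta_t\bu_t)\bu_t]$, I substitute $\bu_t\mapsto-\bu_t$ (legitimate since $\bu_t$ is uniform on the symmetric unit sphere), turning it into $-\E_{\bu_t}[\frac{d}{2\delta_t}f_t(\bw_t+\delta_t\bu_t)\bu_t] = -\frac12\nabla\hat f_t(\bw_t)$. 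Subtracting, the two combine to $\nabla\hat f_t(\bw_t)$.

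The substance is the second moment bound. Fix $\bw_t$ and abbreviate $\delta=\delta_t$, $\bu=\bu_t$. Then
\[
\E[\norm{\tilde{\bg}_t}^2\mid\bw_t]
= \frac{d^2}{4\delta^2}\,\E\!\left[\bigl(f_t(\bw_t+\delta\bu)-f_t(\bw_t-\delta\bu)\bigr)^2\norm{\bu}_*^2\right].
\]
The crucial observation — this is the ``surprisingly powerful modification'' the introduction advertises — is that the symmetric difference can be \emph{centered}: for any constant (depending only on $\bw_t$), say $\hat f_t(\bw_t)$, we have $f_t(\bw_t+\delta\bu)-f_t(\bw_t-\delta\bu) = \bigl(f_t(\bw_t+\delta\bu)-\hat f_t(\bw_t)\bigr)-\bigl(f_t(\bw_t-\delta\bu)-\hat f_t(\bw_t)\bigr)$, so by $(a-b)^2\le 2a^2+2b^2$ it suffices to bound $\frac{d^2}{4\delta^2}\E[(f_t(\bw_t\pm\delta\bu)-\hat f_t(\bw_t))^2\norm{\bu}_*^2]$ for each sign (both are equal by the $\bu\mapsto-\bu$ symmetry). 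Now apply Cauchy–Schwarz to peel off $\norm{\bu}_*^2$:
\[
\E\!\left[(f_t(\bw_t+\delta\bu)-\hat f_t(\bw_t))^2\norm{\bu}_*^2\right]
\le \sqrt{\E\!\left[(f_t(\bw_t+\delta\bu)-\hat f_t(\bw_t))^4\right]}\cdot\sqrt{\E[\norm{\bu}_*^4]}.
\]
The second factor is at most $p_*^2$ by condition \ref{it:3}. For the first factor, define $g(\bu)=f_t(\bw_t+\delta\bu)$, which is $\delta G_2$-Lipschitz in $\bu$ with respect to the $2$-norm (since $f_t$ is $G_2$-Lipschitz), and note $\E_{\bu}[g(\bu)] = \hat f_t(\bw_t)$ by definition. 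Then \lemref{lem:var} with $L=\delta G_2$ gives $\sqrt{\E[(g(\bu)-\E[g(\bu)])^4]}\le c(\delta G_2)^2/d$. Putting it together: $\E[\norm{\tilde{\bg}_t}^2\mid\bw_t] \le \frac{d^2}{4\delta^2}\cdot O(1)\cdot\frac{c\delta^2 G_2^2}{d}\cdot p_*^2 = O(d\,p_*^2 G_2^2)$, as claimed; note the $\delta^2$ factors cancel, so the bound is independent of $\delta_t$.

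The main obstacle — and the whole point — is recognizing that one must center the function evaluations by a \emph{single} constant $\hat f_t(\bw_t)$ before squaring, rather than bounding $|f_t(\bw_t+\delta\bu)-f_t(\bw_t-\delta\bu)|$ directly by $2\delta G_2\norm{\bu}_2$ (which would give the wasteful $d^2$ bound of the old estimator). The symmetric two-point structure is exactly what allows this centering: the difference equals the difference of two centered quantities, each of which concentrates at scale $\delta G_2/\sqrt d$ by Lévy's concentration on the sphere, buying back a full factor of $d$. Everything else is Cauchy–Schwarz and bookkeeping.
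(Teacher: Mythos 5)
Your proposal is correct and follows essentially the same route as the paper: symmetry of $\bu_t$ for the unbiasedness claim, and for the second moment the centering trick plus $(a-b)^2\le 2(a^2+b^2)$, Cauchy--Schwarz against $\E[\norm{\bu}_*^4]$, and \lemref{lem:var} with $L=\delta_t G_2$. The only cosmetic difference is that the paper centers by an arbitrary $\alpha$ and only later sets $\alpha=\E_{\bu}[f(\bw+\delta\bu)]$, whereas you center at $\hat f_t(\bw_t)$ from the outset --- the argument is identical.
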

\begin{proof}
  For simplicity of notation, we drop the $t$ subscript. Since $\bu$ has a symmetric distribution around the origin,
  \begin{align*}
  \E[\tilde{\bg}|\bw] &= \E_{\bu}\left[\frac{d}{2\delta}\left(f(\bw+\delta\bu)-f(\bw-\delta\bu)\right)\bu\right]\\
  &=\E_{\bu}\left[\frac{d}{2\delta}\left(f(\bw+\delta\bu)\right)\bu\right]+\E_{\bu}\left[\frac{d}{2\delta}f(\bw-\delta\bu)(-\bu)\right]\\
  &=\E_{\bu}\left[\frac{d}{2\delta}\left(f(\bw+\delta\bu)\right)\bu\right]+\E_{\bu}\left[\frac{d}{2\delta}f(\bw+\delta\bu)(\bu)\right]\\
  &=\E_{\bu}\left[\frac{d}{\delta}f(\bw+\delta\bu)\bu\right]
  \end{align*}
  which equals $\nabla \hat{f}(\bw)$ by \lemref{lem:hatell}.

  As to the second part of the lemma, we have the following, where $\alpha$ is an arbitrary parameter and where we use the elementary inequality $(a-b)^2\leq 2(a^2+b^2)$.
  \begin{align*}
  \E[\norm{\tilde{\bg}}_*^2|\bw] &= \E_{\bu}\left[\norm{\frac{d}{2\delta}\left(f(\bw+\delta\bu)-f(\bw-\delta\bu)\right)\bu}_*^2\right]\\
  &= \frac{d^2}{4\delta^2}\E_{\bu}\left[\norm{\bu}_*^2 \left(f(\bw+\delta\bu)-f(\bw-\delta\bu)\right)^2\right]\\
  &= \frac{d^2}{4\delta^2}\E_{\bu}\left[\norm{\bu}_*^2 \left(\left(f(\bw+\delta\bu)-\alpha\right)-\left(f(\bw-\delta\bu)-\alpha\right)\right)^2\right]\\
  &\leq \frac{d^2}{2\delta^2}\E_{\bu}\left[\norm{\bu}_*^2 \left(\left(f(\bw+\delta\bu)-\alpha\right)^2+\left(f(\bw-\delta\bu)-\alpha\right)^2\right)\right]\\
  &= \frac{d^2}{2\delta^2}\left(\E_{\bu}\left[\norm{\bu}_*^2 \left(f(\bw+\delta\bu)-\alpha\right)^2\right]+\E_{\bu}\left[\norm{\bu}_*^2\left(f(\bw-\delta\bu)-\alpha\right)^2\right]\right).
  \end{align*}
  Again using the symmetrical distribution of $\bu$, this equals
  \begin{align*}
      \frac{d^2}{2\delta^2}&\left(\E_{\bu}\left[\norm{\bu}_*^2 \left(f(\bw+\delta\bu)-\alpha\right)^2\right]+\E_{\bu}\left[\norm{\bu}_*^2\left(f(\bw+\delta\bu)-\alpha\right)^2\right]\right)\\
      &\frac{d^2}{\delta^2}\E_{\bu}\left[\norm{\bu}_*^2\left(f(\bw+\delta\bu)-\alpha\right)^2\right].
  \end{align*}
  Applying Cauchy-Schwartz and using the condition $\sqrt[4]{\E_{\bu}\norm{\bu}_*^4}\leq p_*$ stated in the theorem, we get the upper bound
  \[
  \frac{d^2}{\delta^2}\sqrt{\E_{\bu}\left[\norm{\bu}_*^4\right]}\sqrt{\E_{\bu}\left[\left(f(\bw+\delta\bu)
  -\alpha\right)^4\right]} ~=~ \frac{p_{*}^2 d^2}{\delta^2}\sqrt{\E_{\bu}\left[\left(f(\bw+\delta\bu)
  -\alpha\right)^4\right]}.
  \]
  In particular, taking $\alpha=\E_{\bu}[f(\bw+\delta\bu)]$
  and using \lemref{lem:var} (noting that $f(\bw+\delta\bu)$ is $G_2\delta$-Lipschitz
  w.r.t. $\bu$ in terms of the $2$-norm), this is at most
  $\frac{p_{*}^2 d^2}{\delta^2}c\frac{(G_2\delta)^2}{d} = cd p_{*}^2 G_2^2$ as
  required.
\end{proof}

We are now ready to prove the theorem. Taking expectations on both sides of
the inequality in \lemref{lem:zink}, we have
\begin{equation}\label{eq:ineqexp}
\E\left[\sum_{t=1}^{T}\inner{\tilde{\bg}_t,\bw_t-\bw^*}\right]~\leq~ \frac{1}{\eta}R^2+\eta\sum_{t=1}^{T}\E\left[\norm{\tilde{\bg}_t}_*^2\right]
~=~ \frac{1}{\eta}R^2+\eta\sum_{t=1}^{T}\E\left[\E\left[\norm{\tilde{\bg}_t}_*^2|\bw_t\right]\right].
\end{equation}
Using \lemref{lem:moment}, the right hand side is at most
\[
\frac{1}{\eta}R^2+\eta cd p_{*}^2 G_2^2 T
\]
The left hand side of \eqref{eq:ineqexp}, by \lemref{lem:moment} and
convexity of $\hat{f}_t$, equals
\[
\E\left[\sum_{t=1}^{T}\inner{\E[\tilde{\bg}_t|\bw_t],\bw_t-\bw^*}\right]
= \E\left[\sum_{t=1}^{T}\inner{\nabla \hat{f}_t(\bw_t),\bw_t-\bw^*}\right]
\geq \E\left[\sum_{t=1}^{T}\left(\hat{f}_t(\bw_t)-\hat{f}_t(\bw^*)\right)\right].
\]
By \lemref{lem:hatell}, this is at least
\[
\E\left[\sum_{t=1}^{T}\left(f_t(\bw_t)-f_t(\bw^*)\right)\right]-G_2\sum_{t=1}^{T}\delta_t.
\]
Combining these inequalities and plugging back into \eqref{eq:ineqexp}, we
get
\[
\E\left[\sum_{t=1}^{T}\left(f_t(\bw_t)-f_t(\bw^*)\right)\right] \leq G_2\sum_{t=1}^{T}\delta_t
+\frac{1}{\eta}R^2+cd p_{*}^2 G_2^2\eta T.
\]
Choosing $\eta=R/(p_* G_2 \sqrt{dT})$, and any $\delta_t\leq p_*R\sqrt{d/T}$,
we get
\[
\E\left[\sum_{t=1}^{T}\left(f_t(\bw_t)-f_t(\bw^*)\right)\right] \leq (c+2)p_* G_2 R\sqrt{dT}.
\]
Dividing both sides by $T$, the result follows.

\bibliographystyle{plain}
\bibliography{mybib}

\begin{thebibliography}{1}

\bibitem{AgDeXi10}
A.~Agarwal, O.~Dekel, and L.~Xiao.
\newblock Optimal algorithms for online convex optimization with multi-point
  bandit feedback.
\newblock In {\em COLT}, 2010.

\bibitem{barvinok05}
A.~Barvinok.
\newblock Measure concentration lecture notes.
\newblock \url{http://www.math.lsa.umich.edu/~barvinok/total710.pdf}, 2005.

\bibitem{cesa2004generalization}
N.~Cesa-Bianchi, A.~Conconi, and C.~Gentile.
\newblock On the generalization ability of on-line learning algorithms.
\newblock {\em Information Theory, IEEE Transactions on}, 50(9):2050--2057,
  2004.

\bibitem{dujww13}
J.~Duchi, M.~Jordan, M.~Wainwright, and A.~Wibisono.
\newblock Optimal rates for zero-order optimization: the power of two function
  evaluations.
\newblock {\em Information Theory, IEEE Transactions on}, 61(5):2788--2806, May
  2015.

\bibitem{FlaxKaMc05}
A.~Flaxman, A.~Kalai, and B.~McMahan.
\newblock Online convex optimization in the bandit setting: gradient descent
  without a gradient.
\newblock In {\em SODA}, 2005.

\bibitem{GL13}
S.~Ghadimi and G.~Lan.
\newblock Stochastic first- and zeroth-order methods for nonconvex stochastic
  programming.
\newblock {\em SIAM Journal on Optimization}, 23(4):2341--2368, 2013.

\bibitem{ledoux2005concentration}
M.~Ledoux.
\newblock {\em The concentration of measure phenomenon}, volume~89.
\newblock American Mathematical Soc., 2005.

\bibitem{nesterov2011random}
Y.~Nesterov.
\newblock Random gradient-free minimization of convex functions.
\newblock Technical Report 2011/16, ECORE, 2011.

\bibitem{SSS12}
S.~Shalev-Shwartz.
\newblock Online learning and online convex optimization.
\newblock {\em Foundations and Trends in Machine Learning}, 4(2), 2012.

\end{thebibliography}

\appendix

\section{Proof of \lemref{lem:uinfbound}}

We note that the distribution of $\bu$ is equivalent to that of
$\frac{\norm{\bn}_{\infty}^4}{\norm{\bn}_2^4}$, where
$\bn\sim\Ncal\left(\mathbf{0},I_d\right)$ is a standard Gaussian random
vector. Moreover, by a standard concentration bound on the norm of Gaussian
random vectors (e.g. Corollary 2.3 in \cite{barvinok05}, with
$\epsilon=1/2$):
\[
\max\left\{\Pr\left(\norm{\bn}_2\leq \sqrt{\frac{d}{2}}\right),
\Pr\left(\norm{\bn}_2\geq \sqrt{2d}\right)\right\} \leq \exp\left(-\frac{d}{16}\right).
\]
Finally, for any value of $\bn$, we always have
$\frac{\norm{\bn}_{\infty}}{\norm{\bn}_2}\leq 1$, since the Euclidean norm is
always larger than the infinity norm. Combining these observations, and using
$\mathbf{1}_A$ for the indicator function of the event $A$, we have
\begin{align}
  \E[\norm{\bu}_\infty^4]&=\E\left[\frac{\norm{\bn}_{\infty}^4}{\norm{\bn}_2^4}\right]\notag\\
  &= \Pr\left(\norm{\bn}_2\leq \sqrt{\frac{d}{2}}\right)\E\left[\frac{\norm{\bn}_{\infty}^4}{\norm{\bn}_2^4}~\middle|~ \norm{\bn}_2\leq \sqrt{\frac{d}{2}}\right]
  +\Pr\left(\norm{\bn}_2> \sqrt{\frac{d}{2}}\right)\E\left[\frac{\norm{\bn}_{\infty}^4}{\norm{\bn}_2^4}~\middle|~ \norm{\bn}_2> \sqrt{\frac{d}{2}}\right]\notag\\
  &\leq \exp\left(-\frac{d}{16}\right)*1+\Pr\left(\norm{\bn}_2> \sqrt{\frac{d}{2}}\right)\E\left[\frac{\norm{\bn}_{\infty}^4}{\left(\sqrt{d/2}\right)^4}~\middle|~ \norm{\bn}_2> \sqrt{\frac{d}{2}}\right]\notag\\
  &= \exp\left(-\frac{d}{16}\right)+\left(\frac{2}{d}\right)^2\E\left[\norm{\bn}_{\infty}^4\mathbf{1}_{\norm{\bn}_2>\sqrt{d/2}}\right]\notag\\
  &\leq \exp\left(-\frac{d}{16}\right)+\frac{4}{d^2}\E\left[\norm{\bn}_{\infty}^4\right].\label{eq:uboundd}
\end{align}
Thus, it remains to upper bound $\E\left[\norm{\bn}_{\infty}^4\right]$ where
$\bn$ is a standard Gaussian random variable. Letting $\bn=(n_1,\ldots,n_d)$,
and noting that $n_1,\ldots,n_d$ are independent and identically distributed
standard Gaussian random variables, we have for any scalar $z\geq 1$ that
\begin{align*}
  \Pr(\norm{\bn}_{\infty}\leq z) &= \prod_{i=1}^{n}\Pr(|n_i|\leq z) ~=~
  \left(\Pr(|n_1|\leq z)\right)^d\\
  &= \left(1-\Pr(|n_1|> z)\right)^d ~\stackrel{(1)}{\geq}~ 1-d\Pr(|n_1|> z)\\
  &= 1-2d\Pr(n_1 > z) ~\stackrel{(2)}{\geq}~ 1-d\exp(-z^2/2),
\end{align*}
where $(1)$ is Bernoulli's inequality, and $(2)$ is using a standard tail
bound for a Gaussian random variable. In particular, the above implies that
\[
\Pr\left(\norm{\bn}_{\infty}> z\right)\leq d\exp(-z^2/2).
\]
Therefore, for an arbitrary positive scalar $r\geq 1$,
\begin{align*}
  \E\left[\norm{\bn}_{\infty}^4\right]&= \int_{z=0}^{\infty}\Pr\left(\norm{\bn}_{\infty}^4> z\right)dz\\
  &\leq \int_{z=0}^{r}1 dz+\int_{z=r}^{\infty}\Pr\left(\norm{\bn}_{\infty}> \sqrt[4]{z}\right)dz\\
  &\leq r+\int_{z=r}^{\infty}d\exp\left(-\frac{\sqrt{z}}{2}\right)dz\\
  &= r+4d(2+\sqrt{r})\exp\left(-\frac{\sqrt{r}}{2}\right).
\end{align*}
In particular, plugging $r=4\log^2(d)$ (which is larger than $1$, since we
assume $d>1$), we get $4(2+2\log(d)+\log^2(d))$. Plugging this back into
\eqref{eq:uboundd}, we get that
\[
\E[\norm{\bu}_\infty^4]~\leq~
\exp\left(-\frac{d}{16}\right)+16\frac{2+2\log(d)+\log^2(d)}{d^2},
\]
which can be shown to be at most $c'\left(\frac{\log(d)}{d}\right)^2$ for all
$d>1$, where $c'<150$ is a numerical constant. In particular, this means that
$\sqrt[4]{\E[\norm{\bu}_\infty^4]}\leq \sqrt[4]{c'}\sqrt{\frac{\log(d)}{d}}$
as required.

\end{document}